\newtheorem{theorem}{Theorem}[section]
\newtheorem*{theorem*}{Theorem}
\newtheorem{proposition}[theorem]{Proposition}
\newtheorem{lemma}[theorem]{Lemma}
\newcommand{\wrt}[1]{\, \mathrm{d} #1}
\newcommand{\norm}[1]{\left\lVert#1\right\rVert}
\DeclareMathOperator{\argmax}{arg\,max}
\title{Bayesian optimization for backpropagation in Monte-Carlo tree search}
\author{ Yueqin Li \\ yueqin\_li@mymail.sutd.edu.sg \and Nengli Lim \\ nengli\_lim@sutd.edu.sg}
\begin{document}

\maketitle

\begin{abstract}
In large domains, Monte-Carlo tree search (MCTS) is required to estimate the values of the states as efficiently and accurately as possible. However, the standard update rule in backpropagation assumes a stationary distribution for the returns, and particularly in min-max trees, convergence to the true value can be slow because of averaging. We present two methods, Softmax MCTS and Monotone MCTS, which generalize previous attempts to improve upon the backpropagation strategy. We demonstrate that both methods reduce to finding optimal monotone functions, which we do so by performing Bayesian optimization with a Gaussian process (GP) prior. We conduct experiments on computer Go, where the returns are given by a deep value neural network, and show that our proposed framework outperforms previous methods.
\end{abstract}

\vspace{-0.3cm}
\section{Introduction}
\vspace{-0.1cm}
Monte-Carlo tree search (MCTS) \citep{coulom2006efficient, browne2012survey}, or more specifically its most common variant UCT (Upper Confidence Trees; see Section 2) \citep{kocsis}, has seen great successes recently and has propelled, especially in combination with deep neural networks, the performance of computer Go past professional levels \citep{silver2016mastering, silver2017mastering}.
The robust nature of MCTS, versus a traditional approach like depth-first search in alpha-beta pruning, has not only enabled a leap-frog in performance in computer Go, but has also led to its utilization in other games where it is difficult to evaluate states, as well as in other domains \citep{browne2012survey}.

However, MCTS is known to suffer from slow convergence in certain situations \citep{coquelin2007bandit}, in particular when the precise calculation of a narrow tactical sequence is critical for success. For example in boardgames, \citep{ramanujan2010adversarial} defines a level-$k$ search trap for player $p$ after a move $m$ as a state of the game where the opponent of $p$ has a guaranteed $k$-move winning strategy. More relevantly, they show through a series of experiments that MCTS performs poorly even in shallow traps, in contrast to regular minimax search; see also \citep{ramanujan2011behavior, ramanujan2011trade}.

To better understand this phenomenon, we take a closer look at the update rule
\begin{align} \label{origUpdate}
Q_n \leftarrow Q_{n-1} + \frac{R_{n-1} - Q_{n-1}}{n}
\end{align}
which is performed during the backpropagation phase of MCTS. Here, the current estimate of the value of a state is taken to be the simple average of all previous returns accrued upon visiting that state. Proceeding, we discuss various methods which seek to improve backpropagation by challenging the basic assumptions implied by \eqref{origUpdate}:
\begin{enumerate} [(i)]
\item Value estimation by averaging returns: \\
Instead of updating a parent node's value with that of its MAX (MIN) child as in minimax search, backpropagation in MCTS averages all returns to obtain a good signal in noisy environments (this is equivalent to setting the value of the parent node to be the weighted average (by visits) of its children's values).
\item Stationarity: \\
The returns are assumed to follow a stationary distribution.
\end{enumerate}
With regard to the first point, one of the first published works on MCTS \citep{coulom2006efficient} posits that taking the value of the best child leads to an overestimation (cf. \citep{cohen}) of the value of a MAX node, whereas taking the weighted average (by number of visits) of the children's values leads to an underestimation. The paper proposes using an interpolated value, with weights dependent on the current number of visits of the best child:
\begin{align} \label{coulom}
Q_{parent} = \left( \frac{N_{best}}{N_{best} + M} \right) Q_{best} + \left( 1 - \frac{N_{best}}{N_{best} + M}\right) Q_{mean}.
\end{align}
Here, $N_{best}$ and $Q_{best}$ respectively denote the number of visits and backed-up value of the best child, and $M$ is a variable which slowly increases after some fixed threshold to dampen the increasing weight. \par 

Similarly in \citep{khandelwal2016analysis}, a backup strategy MaxMCTS($\lambda$) is proposed where an eligibility parameter $\lambda$ can be adjusted to strike a balance between taking the weighted average of the children's values ($\lambda = 1$) and taking the value of the best child ($\lambda = 0$). In addition, they show that the optimal value for $\lambda$ depends on the context; e.g. in Grid World experiments, it is demonstrated that the more obstacles that are present in the grid, the more $\lambda$ has to be lowered in order to maintain good performance. This corresponds with the findings in \citep{ramanujan2010adversarial, ramanujan2011trade, ramanujan2011behavior}, in that standard MCTS may perform well in environments, for example in the opening stages of Go, where global strategy is more important, but its performance tends to degrade in highly tactical situations; see also \citep{baier}. \par 

Moving on to the second premise, while stationarity may be a viable assumption in multi-armed bandit problems, and although MCTS can be viewed as a sequential multi-armed bandit problem, it is evident that the later simulations explore a larger tree than the earlier simulations. This implies that the sequence of rewards follows a non-stationary distribution, where the returns from later simulations are more informative than the earlier ones, and hence it would be natural to weight them more heavily. \par 

One way of doing this is to simply employ the exponential recency-weighted average update (ERWA) \citep{sutton2018reinforcement} where \eqref{origUpdate} is replaced by
\begin{align} \label{erwa}
Q_n \leftarrow Q_{n-1} + \alpha \left( r(n) - Q_{n-1} \right), \qquad \alpha \in (0, 1];
\end{align} 
see also \citep{hashimoto2011accelerated} where they employ a similar backup strategy. \par

A more sophisticated method called feedback adjustment policy is explored in \citep{xie2009backpropagation}, where here they test four different weight profiles of varying shapes. The following figure provides an illustration. \\

\begin{figure}[H]
\vspace{-0.7cm}
\centering	
\includegraphics[scale=0.55]{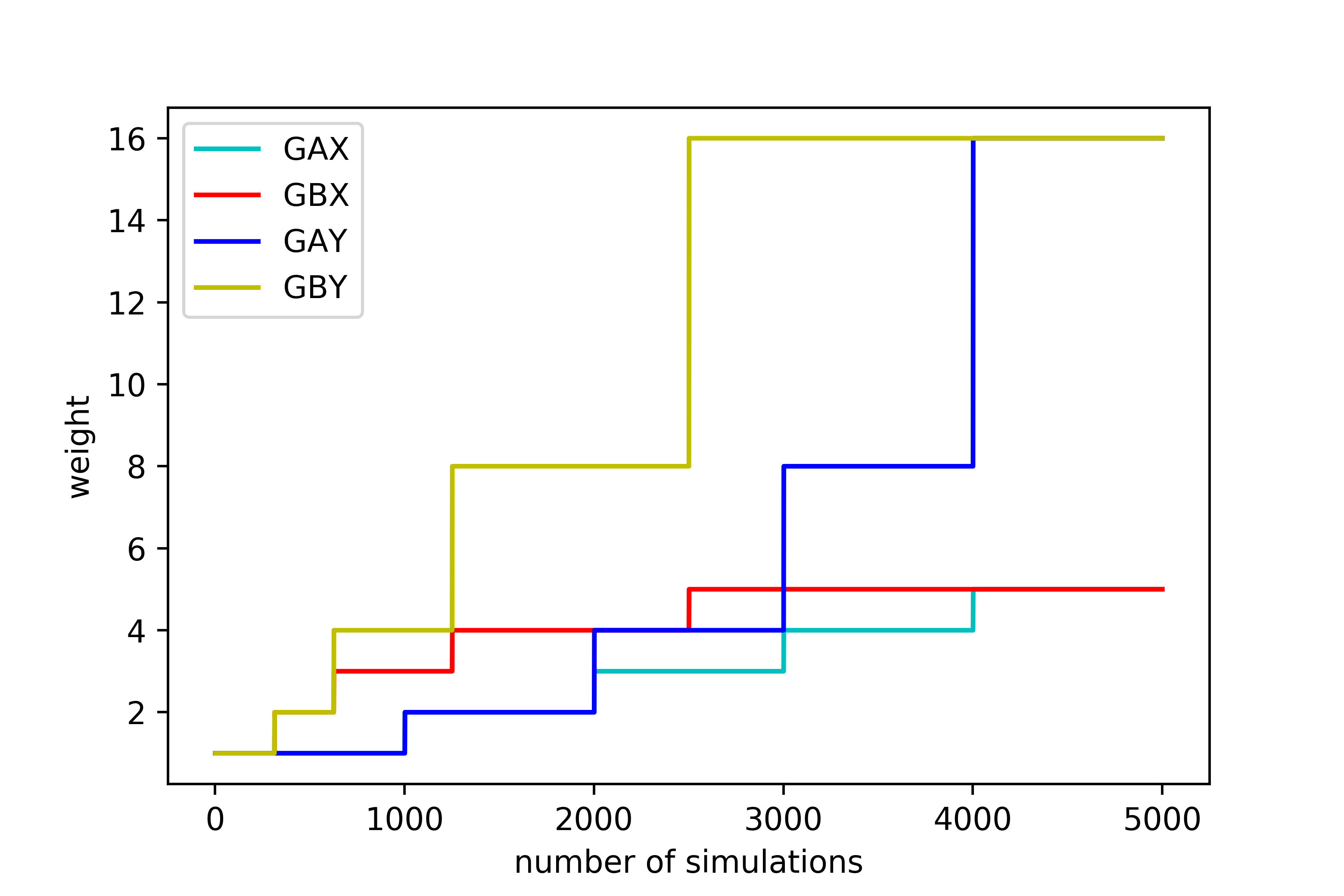}
\caption{Graph depicting the weight increase of four profiles in feedback adjustment policy; $GAX$: linear increase on a uniform partition; $GAY$: exponential increase on a uniform partition; $GBX$: linear increase on a partition with exponentially increasing widths; $GBY$: exponential increase on a partition with exponentially increasing widths.}
\end{figure}

\vspace{-0.2cm}

Experiments on 9x9 Go show that $GBY$ gives the highest winning rate over original MCTS. Just as importantly, they show that in spite of the fact that the functions all monotonically weight the later simulations more heavily, the differences in their particular shapes have a big impact on performance; $GBX$ was found to provide no significant advantage over standard MCTS, in contrast to a 26.6\% boost when using $GBY$ \citep{xie2009backpropagation}.

Despite differences in these various methods, we can summarize the overarching principles they have in common as follows:
\begin{enumerate} [(i)]
\item the best child should be weighted more heavily as the number of simulations increase;
\item later simulations should be weighted more strongly than earlier ones.
\end{enumerate}
Taking this into account, in this paper we propose Monotone MCTS and Softmax MCTS, two backpropagation strategies which aim to generalize and improve upon the previous methods. We first represent the weights as a function of the number of visits of the node in question, and naturally constrain it to be a monotone function. We then propose to use black-box Bayesian optimization to find these optimal monotone functions.  \par

The rest of the paper is structured as follows. In Section 2, we give a brief review of MCTS and Bayesian optimization using a Gaussian process prior. In Section 3, we go into the details of Monontone MCTS and Softmax MCTS. We show the effectiveness of our approach in experiments on 9x9 and 19x19 Go in Section 4. Finally, we conclude in the last section with some direction on future work.

\vspace{-0.3cm}
\section{Preliminaries}
\vspace{-0.1cm}
\subsection{Monte-Carlo tree search}

In comparison to depth-first search in alpha-beta pruning, MCTS uses best-first search to gather information for planning the next action. This is important particularly in computer Go where the branching factor is large, and the tree is best explored asymmetrically to strike a balance between searching deep sequences in tactical situations and searching wide options in factoring in strategic considerations. This also allows it to be an anytime algorithm, in that terminating the search prematurely can still yield acceptable results.

MCTS consists of the following four steps:

\begin{enumerate} [(i)]
\item Selection: Starting from the root node, the search process descends down the tree by successively selecting child nodes according to the \textit{tree policy}. In particular, if Upper Confidence Bound 1 (UCB1)
\begin{align} \label{ucb1}
\underset{a}{\argmax} \quad Q_a  + C \, \sqrt{\frac{\ln N_{parent}}{N_a + 1}}
\end{align}
is used as the tree policy, then this variant of MCTS is called UCT \citep{kocsis}. More recently, PUCT \citep{silver2016mastering, auger2013continuous}
\begin{align} \label{puct}
\underset{a}{\argmax} \quad Q_a  + C \, \frac{\sqrt{N_{parent}}}{N_a + 1}
\end{align}
has been gaining popularity. Here, $Q_a$ and $N_a$ denote the mean and visits respectively of child $a$, and $C$ is a constant that can be tuned to balance exploration vs exploitation.
\item Expansion: When the simulation phase reaches a leaf node, children of the leaf node are added to the tree, and one of them is selected by the tree policy.
\item Simulation: One (or multiple) random playout is performed until a terminal node is reached. More recently, simulation can be augmented or even replaced by a suitable evaluation function such as a neural network.
\item Back-propagation: The result of the playout is computed and \eqref{origUpdate} is used to update each node visited in the selection phase.
\end{enumerate}
Averaging the results in each node is essential in noisy environments and when it is critical not to back up values in a manner such that outliers affect the algorithm adversely. However, it can be slow to converge to the optimal value in min-max trees, particularly in nodes where the siblings of an optimal child node are all lower in value \citep{fuThesis}.

\subsection{Bayesian optimization with a Gaussian process prior}
Given an index set $T$, $\left\{ X_t; t\in T \right\}$ is a Gaussian process if for any finite set of indices $\left\{t_1,...,t_n \right\}$ of $T$, $\left(X_{t_1},...,X_{t_n} \right)$ is a multivariate normal random variable. By specifying a mean function $\mu : \mathbb{R}^d \rightarrow \mathbb{R}$ and a symmetric, positive semi-definite kernel function $k:\mathbb{R}^d \times \mathbb{R}^d \rightarrow \mathbb{R}$, one can uniquely define a Gaussian process by setting
\begin{align*}
\left(X_{t_1},...,X_{t_n} \right) \sim \mathcal{N}([\mu(x_1), \ldots, \mu(x_n)]^T, K)
\end{align*}
for any finite subset $\left\{t_1,...,t_n \right\}$ of $T$. Here, the covariance function $K$ refers to
\begin{align*}
K = \left[
\begin{matrix}
k(x_1,x_1) & \cdots & k(x_1,x_n)\\
\vdots & \ddots & \vdots\\
k(x_n,x_1) & \cdots & k(x_n,x_n)\\
\end{matrix}
\right].
\end{align*}
In addition, we assume that the model $f$ is perturbed with noise,
\begin{align*}
y = f(x) + \epsilon,
\end{align*}
where $\epsilon \sim \mathcal{N}(0, \tau^2)$, and is assumed to be independent between samples. \par 

In many machine learning problems, the objective function $f$ to optimize is a black-box function which does not have an analytic expression, or may have one that is too costly to compute. Hence, a Gaussian processes are used as surrogate models to approximate the true function as they yield closed-form solutions. For example, if we stipulate that $f(\cdot) \sim \mathcal{N}(0, k)$, then given a history of input-observation pairs  $\left( x^{(1)}, t^{(1)} \right), \ldots, \left( x^{(n)}, t^{(n)} \right)$, and a new input point $x^{(n+1)}$, we can predict $y^{(n+1)}$ by computing the posterior distribution
\begin{align*}
p\left(y^{(n+1)} \, \Big| \, y^{(1)}=t^{(1)}, \ldots, y^{(n)}=t^{(n)} \right),
\end{align*}
which is Gaussian with mean $\mu$ and variance $\sigma^2$ given by the formulas
\begin{align} \label{meanCov}
\mu = r^T(K^n)^{-1}t_n,\\
\sigma^2 = c - r^T(K^n)^{-1}r.
\end{align}
Here, we denote
\begin{align*}
&r = \left[ k\left( x^{(1)}, x^{(n+1)} \right),\ldots, k\left( x^{(n)},x^{(n+1)} \right) \right]^T, \\
&c = k \left( x^{(n+1)},x^{(n+1)} \right) + \tau^2, \\
&t_n = \left[t^{(1)}, \ldots, t^{(n)}\right]^T,
\end{align*}
and $K^n$ is the covariance matrix corresponding the first $n$ inputs. For more information on Gaussian processes for machine learning, we refer the reader to \citep{williams2006gaussian}. \par 

Another reason for Bayesian optimization becomes apparent when finding the optimal value of $f$ is costly, for example in high-dimensional problems where performing a grid search to find the optimal value is prohibitive, eg. hyper-parameter tuning in deep learning models. \par 

In such cases, an \textit{acquisition function} is selected to guide sampling to areas where one will have an increased probability of finding the optimum. Two common examples are Expected Improvement(EI)
\begin{align*}
A(x,f^*) &= \mathbb{E}\left[max\left\{f_x - f^*,0\right\}\right] \\
&= \sigma_x\left[\gamma_x\Phi(\gamma_x)+\phi(\gamma_x)\right], \\
&\hspace{-3em} \gamma_x = \frac{\mu_x - f^*}{\sigma_X}, \quad f_x \sim \mathcal{N}(\mu_x, \sigma_x^2),
\end{align*}
where $f^*$ denotes the maximum value of $f$ found so far, and Upper Confidence Bound (UCB)
\begin{align*}
A(x) = \mu_x + \kappa \sigma_x.
\end{align*}
In both examples, $\mu_x$ and $\sigma_x$ are obtained from \eqref{meanCov}, and there is a trade-off between exploration and exploitation in the selection of the next point. \par 

For greater efficiency, we use Spearmint \citep{snoek2012practical}, which allows the optimization procedure to be run in parallel on multiple cores. Spearmint adopts the Mat\'{e}rn $\frac{5}{2}$ kernel 
\begin{align*}
&K_{M52} (x, y)  \\
&\quad= c \left(1 + \sqrt{5 \norm{x - y}^2} + \frac{5}{3} \norm{x - y}^2 \right) e^{-\sqrt{5 \norm{x - y}^2}}
\end{align*}
for the Gaussian process prior, and chooses the next point based on the expected acquisition function under all possible outcomes of the pending evaluations. It was shown to be effective for many algorithms including latent Dirichlet allocation and hyper-parameter tuning in convolutional neural networks \citep{snoek2012practical}. \par 

In the context of computer Go, Bayesian optimization with a Gaussian process prior has also previously been used in \citep{alphaGoOpt}. With regard to MCTS, they perform optimization over the UCT exploration constant and the mixing ratio between fast rollouts and neural network evaluations.

\vspace{-0.3cm}
\section{Methods}
\vspace{-0.1cm}
To find optimal backpropagation strategies, we first parameterize a family of smooth monotone functions, then perform Bayesian optimization with a Gaussian process prior as reviewed in the previous section. To describe this family of functions, we invoke the following lemma.

\begin{lemma}
$w: [a, b] \rightarrow \mathbb{R}$ is continuously differentiable and strictly monotonic if and only if there exists a continuous function $p:[a, b] \rightarrow \mathbb{R}$ such that
\begin{align} \label{integral}
w(t) = w(a) + \int_a^t e^{p(s)} \wrt{s}.
\end{align}
\end{lemma}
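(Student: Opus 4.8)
The plan is to establish the two implications of the biconditional separately, reducing everything to the fundamental theorem of calculus. Since the integrand $e^{p(s)}$ in \eqref{integral} is strictly positive, the representation can only produce a strictly \emph{increasing} $w$; I would therefore treat the strictly increasing case directly and recover the strictly decreasing case at the end by a sign change (replacing $+\int_a^t e^{p(s)}\wrt{s}$ with $-\int_a^t e^{p(s)}\wrt{s}$), noting that the formula as literally written covers only the increasing branch.

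For sufficiency, suppose such a continuous $p$ exists. As $s \mapsto e^{p(s)}$ is continuous, the fundamental theorem of calculus applied to \eqref{integral} shows that $w$ is differentiable with $w'(t) = e^{p(t)}$; this derivative is continuous, so $w \in C^1$, and it is strictly positive, so $w$ is strictly increasing. This direction is immediate. For necessity, assume $w$ is $C^1$ and strictly increasing. The natural candidate is $p(t) := \ln w'(t)$: provided $w' > 0$ on $[a,b]$, this is continuous as a composition of continuous maps, and since $e^{p(s)} = w'(s)$, the fundamental theorem of calculus gives $\int_a^t e^{p(s)}\wrt{s} = \int_a^t w'(s)\wrt{s} = w(t) - w(a)$, which is exactly \eqref{integral}.

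The hard part is the \emph{strict positivity} of $w'$ needed to make $p = \ln w'$ well defined, which is genuinely stronger than strict monotonicity: a $C^1$ strictly increasing function only satisfies $w' \geq 0$, as $w(t) = t^3$ shows with $w'(0) = 0$, and there $\ln w'$ blows up so no continuous $p$ can exist. I would therefore make explicit that ``strictly monotonic'' must be read as ``having a nowhere-vanishing (hence sign-definite) derivative,'' i.e.\ $w' > 0$ in the increasing case, and devote the bulk of the argument to this hypothesis, since it is precisely what guarantees that $p = \ln w'$ is finite and continuous and thus closes the necessity direction.
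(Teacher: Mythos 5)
Your proposal follows the same core route as the paper's proof --- take $p = \log w'$ and apply the fundamental theorem of calculus --- but it is substantially more careful, and in the process you have exposed a genuine error in the paper's own argument. The paper's entire proof is the assertion that ``by the mean-value theorem, $w'(t) > 0$ for all $t \in (a, b)$,'' after which it writes $w(t) = w(a) + \int_a^t e^{\log w'(s)} \wrt{s}$. That assertion is false: for a strictly increasing $C^1$ function, the mean-value theorem (or the sign of difference quotients) only yields $w' \geq 0$, and your example $w(t) = t^3$ on an interval containing the origin is strictly increasing and $C^1$ with $w'(0) = 0$. For such a $w$ no continuous $p$ can satisfy \eqref{integral} at all, since a continuous $p$ on a compact interval is bounded below, which would force $w' = e^{p} \geq e^{\inf p} > 0$. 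So the ``only if'' direction of the lemma is false as stated, and your repair --- reading ``strictly monotonic'' as ``$w'$ is nowhere zero,'' hence sign-definite --- is exactly the hypothesis under which the argument closes. You also supply two pieces the paper omits entirely: the converse direction (immediate from the fundamental theorem of calculus, but it is half of the biconditional), and the observation that \eqref{integral} can only produce \emph{increasing} functions, so the strictly decreasing branch needs the sign change you describe. In short, your proof is correct under the strengthened hypothesis, whereas the paper's one-line proof rests on an invalid appeal to the mean-value theorem.
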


\begin{proof}
By the mean-value theorem, $w'(t) > 0$ for all $t \in (a, b)$. Thus we have
\begin{align*}
w(t) 
&= w(a) + \int_a^t w'(s) \wrt{s} \\
&= w(a) + \int_a^t e^{\log w'(s)} \wrt{s}.
\end{align*}
\end{proof}
Finding the optimal monotone function, even when restricted to continuously differentiable ones, is a functional Bayesian optimization \citep{vien2018bayesian} problem as the optimization is taking place over an infinite dimensional Hilbert space of functions. However, for practical reasons, we instead restrict the class of functions we are optimizing over to be
\begin{align*}
\mathcal{W} = \left\{ w \:\: \bigg| \:\: w(t) = w(0) + \int_0^t e^{p(s)} \wrt{s}, \, p \in \mathcal{P} \right\},
\end{align*}
where
\begin{align*}
&\mathcal{P} = \left\{ p \:\: \bigg| \:\: p(s) = \sum_{i=0}^{m-1} \mathds{1}_{\left[ i \Delta, (i+1) \Delta \right)} (s)
\tilde{p}_i(s) \right\}, \\
&\tilde{p}_i(s) 
:= p \left( i \Delta \right)  + \left( \frac{s}{\Delta}  - i \right) \left( p \left( (i+1) \Delta \right) - p \left( i \Delta \right) \right),
\end{align*}
is the $m$-dimensional space of functions obtained from linearly interpolating between the $m$ points $\left( p(0), p\left(\Delta\right), \ldots, p(N) \right)$ which are uniformly separated by an interval $\Delta = \frac{N}{m-1}$, and $N$ denotes the number of simulations. 

\subsection{Monotone MCTS}
The first backpropagation strategy we propose is Monotone MCTS. We first run the optimization procedure, with respect to win-rate, over $m$ parameters $\left\{ p(0), p\left(\Delta\right), \ldots, p(N) \right\}$. Each set of parameters yields a continuous function $p \in \mathcal{P}$ by interpolation and a monotone weight function $w$ using \eqref{integral} with $w(0)$ set to 1. Upon choosing the optimal set of parameters, the update rule is then modified to be
\begin{align} \label{monotoneMCTS}
Q_n \leftarrow \frac{w(0)}{S(n)} r(0) + \frac{w(1)}{S(n)} r(1) + \cdots + \frac{w(n)}{S(n)} r(n),
\end{align}
where we denote $S(n) := \sum_{t=0}^n w(t)$. \par 

Despite being a subset of all possible monotone functions, we consider $\mathcal{W}$ to be sufficiently rich as it contains all increasing linear functions (starting at 1)
\begin{align*}
(c, c, \ldots, c) \implies w(t) = 1 + e^c t,
\end{align*}
all exponential functions
\begin{align*}
&(\log (ra), \log(ra) + r \Delta, \log(ra) + 2 r \Delta, \ldots) \\
&\qquad \implies  w(t) = (1 - a) + a e^{rt},
\end{align*}
as well as their linear combinations and other monotone functions such as those in \citep{xie2009backpropagation}. As an example, the following simple  Proposition shows how to convert between ERWA with parameter $\alpha$ and our formulation.
\begin{proposition}
ERWA with parameter $\alpha < 1$ can be obtained by setting $p(s) = (\log \lambda) s + \log \left( \alpha \log \lambda \right)$, where $\lambda := \frac{1}{1 - \alpha}$.
\end{proposition}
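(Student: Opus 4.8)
The plan is to run the conversion in the direction from $p$ to $w$ to the ERWA update, using the incremental form of the Monotone update as the bridge. First I would rewrite \eqref{monotoneMCTS} recursively: since $S(n) = S(n-1) + w(n)$ and $S(n) Q_n = S(n-1) Q_{n-1} + w(n) r(n)$, dividing through gives
\[
Q_n = Q_{n-1} + \frac{w(n)}{S(n)}\bigl(r(n) - Q_{n-1}\bigr).
\]
This exhibits Monotone MCTS as a stochastic update of exactly the same shape as ERWA \eqref{erwa}, except that the constant step size $\alpha$ is replaced by the effective step size $\beta_n := w(n)/S(n)$. The proposition then reduces to showing that the prescribed $p$ forces $\beta_n$ to coincide with $\alpha$.

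Next I would compute $w$ explicitly from the given $p$ using the integral representation \eqref{integral} of the Lemma with $w(0) = 1$. Since $e^{p(s)} = \alpha(\log\lambda)\,\lambda^s$, the integral is elementary and yields
\[
w(t) = 1 + \alpha(\log\lambda)\int_0^t \lambda^s \wrt{s} = 1 + \alpha(\lambda^t - 1) = (1-\alpha) + \alpha\lambda^t,
\]
which is precisely the offset-exponential member of $\mathcal{W}$ listed just before the proposition (taking $a = \alpha$ and $r = \log\lambda$). Note that $w(0) = 1$ as required, and that the factor $\alpha$ inside the logarithm in $p$ is exactly what scales the exponential increment correctly.

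Finally I would evaluate the step size. Using the identity $\lambda - 1 = \alpha\lambda$ to collapse the geometric part, one obtains $S(n) = \sum_{t=0}^n w(t) = n(1-\alpha) + \lambda^n$, so that
\[
\beta_n = \frac{w(n)}{S(n)} = \frac{(1-\alpha) + \alpha\lambda^n}{n(1-\alpha) + \lambda^n} \longrightarrow \alpha \quad \text{as } n \to \infty,
\]
because the geometric term $\lambda^n$ dominates the linear term. Substituting this back into the incremental form recovers the constant-step-size update \eqref{erwa} with parameter $\alpha$, establishing that this $p$ implements ERWA.

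The point demanding honest care, and the main obstacle to a clean statement, is that the correspondence is asymptotic rather than term-by-term: at small $n$ the boundary term $(1-\alpha)$ together with the linear part of $S(n)$ makes $\beta_n \neq \alpha$ (indeed $\beta_0 = 1$), and the exact ERWA weights $\alpha\lambda^k$ are recovered only up to this vanishing additive correction. I would therefore phrase the conclusion as: the smooth weight $(1-\alpha)+\alpha\lambda^t \in \mathcal{W}$ reproduces the geometric recency weighting of ERWA at rate $\lambda = 1/(1-\alpha)$ and drives the effective step size to $\alpha$, which is the sense in which ERWA ``is obtained.'' The limit computation and the geometric-sum identity $\lambda - 1 = \alpha\lambda$ are the only substantive checks; the remainder is routine.
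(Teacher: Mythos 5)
Your proof is correct, and it reaches the proposition by a genuinely different route from the paper's. The paper works backwards from ERWA: it expands \eqref{erwa} into the explicit weighted sum $Q_n = (1-\alpha)^n r(0) + \alpha(1-\alpha)^{n-1} r(1) + \cdots + \alpha\, r(n)$, compares coefficients with \eqref{monotoneMCTS} to read off $w(t) = \alpha(1-\alpha)^{-t} = \alpha\lambda^t$, and then verifies that this $w$ admits the integral representation \eqref{integral} with the stated $p$ --- implicitly taking $w(0) = \alpha$ rather than the value $1$ prescribed for Monotone MCTS in Section 3.1. You work forwards from $p$: keeping $w(0) = 1$ you obtain the offset exponential $w(t) = (1-\alpha) + \alpha\lambda^t$, recast \eqref{monotoneMCTS} as an incremental update with effective step size $\beta_n = w(n)/S(n)$, and show $\beta_n \to \alpha$. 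Each approach buys something. The paper's coefficient comparison is shorter and produces exactly geometric weights; yours respects the paper's own normalization $w(0)=1$, exposes Monotone MCTS as an ERWA-type update with a time-varying step size (a useful structural observation in its own right), and is candid that under this normalization the equivalence is asymptotic rather than exact. Notably, your caveat applies to the paper's proof as well: in the ERWA expansion the coefficient of $r(0)$ is $(1-\alpha)^n$, not $\alpha(1-\alpha)^n$, so even with $w(0)=\alpha$ the normalized weights $w(t)/S(n)$ match the ERWA coefficients only up to a boundary discrepancy that vanishes geometrically in $n$; the paper's ``comparing coefficients'' silently drops this term, whereas you flag it explicitly.
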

\begin{proof}
We can expand \eqref{erwa} to obtain
\begin{align*} 
Q_n 
&= (1 - \alpha)^n \, r(0) + \alpha (1 - \alpha)^{n-1} \, r(1) + \cdots + \alpha \, r(n).
\end{align*}
To obtain the weights $w$, we now simply compare coefficients with \eqref{monotoneMCTS} to derive
\begin{align*}
w(t) 
&= \alpha (1 - \alpha)^{-t} \\
&= \alpha + \int_0^t e^{(\log \lambda) s + \log \left( \alpha \log \lambda \right)} \wrt{s}.
\end{align*}
\end{proof}

\vspace{-1cm}
\subsection{Softmax MCTS}

For our second backpropagation strategy, we draw inspiration from the softmax distribution
\begin{align*}
p(x_1, \ldots, x_d)_i = \frac{e^{w x_i}}{\sum_{j = 1}^d e^{w x_j}},
\end{align*}
which converges as $w \rightarrow \infty$ to $e_k = (0, \ldots, 1, \ldots 0)$, with 1 in the $k^{th}$ position, when $x_k$ is the maximum of $\{ x_i \}$. We develop a new robust method, Softmax MCTS, for interpolating between the theoretical minimax value of the node and the original averaged value in standard MCTS as follows. \par 
Let $Q_j$ and $N_j$ respectively denote the mean and number of visits of the $j^{th}$ child. In Softmax MCTS, we define the backpropagation update after every simulation for every parent node as
\begin{align*}
Q_{parent} \leftarrow \frac{\sum_{j=1}^d \alpha_j Q_j}{\sum_{j=1}^d \alpha_j},
\end{align*}
where
\begin{align*}
\alpha_j = N_j \, e^{Q_j w \left( N_{parent} \right)}.
\end{align*}
Here, $w$ is a monotonically increasing function of the number of visits of the parent node, which will be optimized in the same manner as given in the previous subsection, with the difference that now $w(0)$ is set to $0$. In early stages when $w$ is close to 0, $w_j$ is approximately $N_j$, which means that
\begin{align*}
Q_{parent} \approx \frac{\sum_{j=1}^d N_j Q_j}{\sum_{j=1}^d N_j}.
\end{align*}
This is equivalent to the weighted-average update rule of standard MCTS. As $w$ increases with the number of visits, the weights will gradually favour the child with the maximum mean (minimum if the parent is a MIN node). \par 
We believe that this is more robust than the method given in \citep{coulom2006efficient} as at any given time the interpolation is taken between the soft maximum and the averaged value, rather than between the averaged value and the hard maximum which is volatile to outliers in the returns. \par

Another noteworthy point is that in our method, as well as in \citep{coulom2006efficient} and \citep{khandelwal2016analysis}, the best performance comes from an update rule which invariably underestimates the max function, which leads us to hypothesize that the main trade-off between weighting the best child more or less heavily may not be so much a question of overestimation or underestimation, but rather one of robustness. The experiments in the next section will demonstrate that Softmax MCTS outperforms the method in \citep{coulom2006efficient} over a number of different parameters.
\vspace{-0.3cm}
\section{Experiments}
\vspace{-0.1cm}
In this section, we use 9x9 and 19x19 Go as a testbed to run Monotone MCTS and Softmax MCTS against several methods in the literature. We first establish a baseline by running these methods against standard MCTS. The table below records the win-rates (\%).

\begin{table} [H]
\centering
\caption{Win-rates (\%) of the various methods versus standard MCTS on 9x9 and 19x19 Go.}
\begin{tabular}{|c|c|c|} 
\hline
&  9 x 9 Go & 19 x 19 Go \\ 
\hline
Coulom (2, 16) & 44.9 & 53.3 \\
\hline
Coulom (4, 32) & 45.8 & 51.2 \\ 
\hline
Coulom (8, 64) & 48.3 & 50.2 \\
\hline
$GAY$ & 50.2 & 56.1 \\
\hline
$GBY$ & 51.8 & 54.5 \\
\hline 
ERWA, $\alpha = 0.00001$ & 51.2 & 53.9 \\
\hline 
ERWA, $\alpha = 0.0001$ & 50.3 & 56.9 \\
\hline
ERWA, $\alpha = 0.001$ & 52.9 & 55.1 \\
\hline
\end{tabular}
\end{table}

Coulom$(x, y)$ refers to the method in \citep{coulom2006efficient}, where $x$ is proportional to the mean-weight parameter in \eqref{coulom} and $y$ controls when it begins increasing. $GAY$ and $GBY$ are the best feedback adjustment policies in \citep{xie2009backpropagation} (see Figure 1), and we also test ERWA at various parameters of $\alpha$. \par 

All experiments are run with 5000 moves per simulation for 9x9 Go, and 1600 moves per simulation for 19x19 Go. For this and all subsequent tests, the win-rates are computed based on 1000 games.  \par 

We follow the architecture in \citep{silver2017mastering} for the neural nets. This consists of an input convolutional layer, followed by several layers of residual blocks with batch normalization (4 layers for 9x9, 10 layers for 19x19), followed by two "heads", one which outputs the policy vector and the other the value of the position. Both heads start with a convolutional layer, and is followed by a fully-connected layer before the output. The input layer has 10 (18 for 19x19) channels encoding the current position and the previous 4 (8 for 19x19) positions. In each residual block, we used 64 filters for 19x19 in the convolutional layers and 32 filters for 9x9. \par 

The neural net for 9x9 was trained tabula rasa using reinforcement learning \citep{silver2017mastering} over 600,000 training steps, with each step processing a minibatch of 16 inputs. We trained the 19x19 neural net by supervised learning over the GoGod database of approximately 15 million datapoints from 80,000 games. \par 

In all tests, we use PUCT \eqref{puct} with the exploration constant set to 0.5, and weight the exploration term by the distribution given by the policy vector output of the neural networks \citep{silver2016mastering, silver2017mastering}.

\subsection{Monotone MCTS}

We run Spearmint with $m = 6$ parameters to optimize the win-rate of Monotone MCTS against standard MCTS. For every set of parameters, 400 games were run to determine the win-rate during the optimization phase (we run 1000 games for testing). The tables below record the results for 9x9 Go of Monotone MCTS versus standard MCTS, ERWA and the feedback adjustment policies in \citep{xie2009backpropagation} for the two best sets of parameters found.
\vspace{-10pt}
\begin{table} [H]
\centering
\caption{Win-rates (\%) for Monotone MCTS with the 1st set of parameters (-10.0, -10.0, -4.0, -4.0, -4.0, -10.0) versus various methods (left column) in 9x9 Go.}
\begin{tabular}{|c|c|} 
\hline
Standard MCTS & \textbf{53.1} \\
\hline
ERWA, $\alpha = 0.00001$ & 50.3 \\ 
\hline
ERWA, $\alpha = 0.0001$ & 52.8 \\ 
\hline
ERWA, $\alpha = 0.001$ & 52.8 \\ 
\hline
$GAY$ & 51.5 \\ 
\hline
$GBY$ & 51.0 \\
\hline 
\end{tabular}
\end{table}
\vspace{-15pt}
\begin{table} [H]
\centering
\caption{Win-rates (\%) for Monotone MCTS with the 2nd set of parameters (-4.0, -4.0, -4.0, -10.0, -4.0, -4.0) versus various methods (left column) in 9x9 Go.}
\begin{tabular}{|c|c|} 
\hline
Standard MCTS & \textbf{54.5} \\
\hline
ERWA, $\alpha = 0.00001$  & 52.3 \\ 
\hline
ERWA, $\alpha = 0.0001$  & 50.5 \\ 
\hline
ERWA, $\alpha = 0.001$  & 53.1 \\ 
\hline
$GAY$  & 52.3 \\
\hline 
$GBY$  & 51.1 \\
\hline 
\end{tabular}
\end{table}
\vspace{-10pt}
We also test Monotone MCTS against standard MCTS in 19x19 Go and find that the first set of parameters achieves a win-rate of 56.0\%, whereas the second set of parameters achieves a win-rate of 54.3\%. \par 

The figure below shows a graph of the weight profiles.

\begin{figure} [H]
\centering
\includegraphics[scale=0.45]{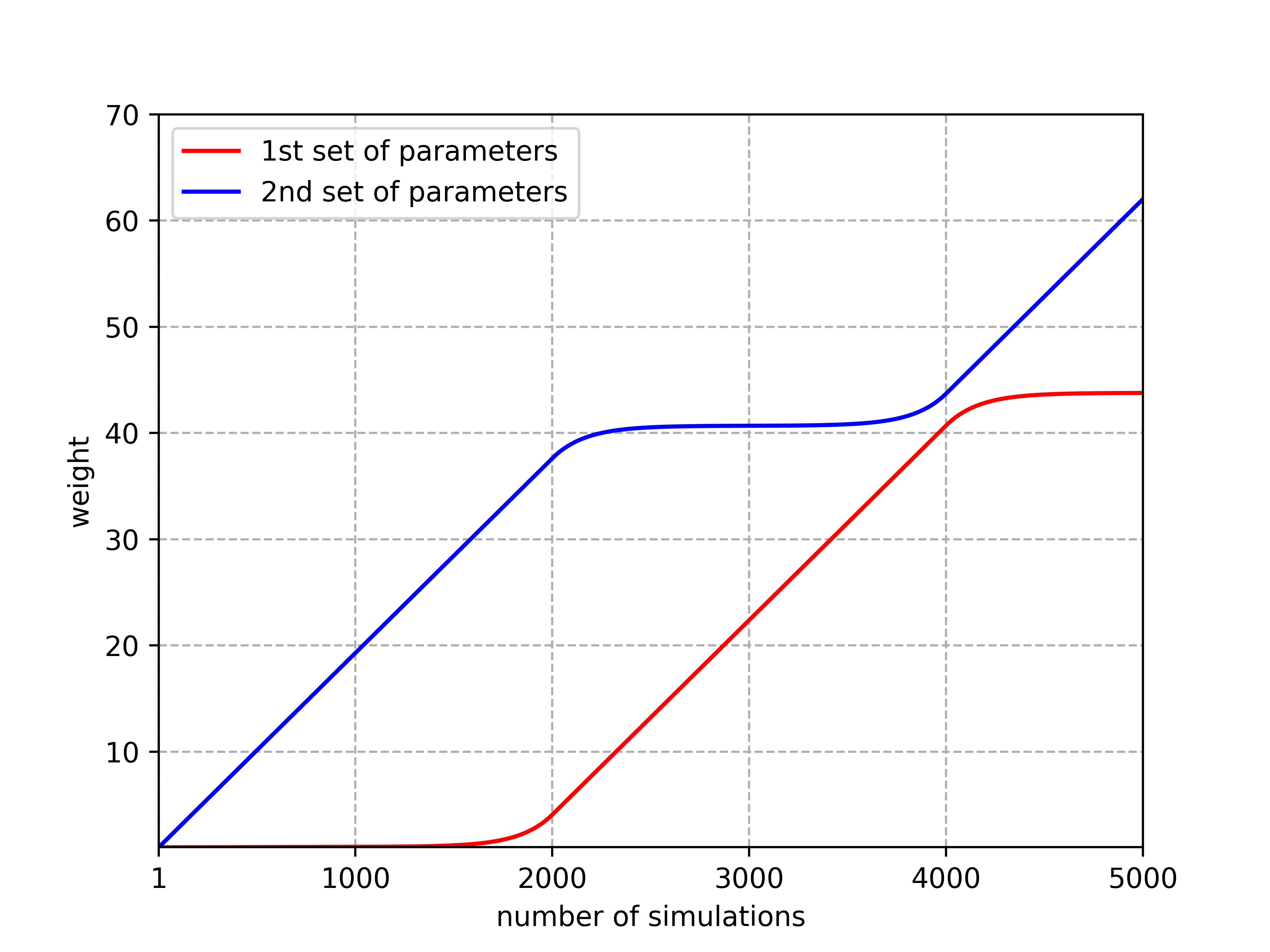}
\caption{Weight profiles of Monotone MCTS for both sets of parameters.}
\end{figure}

\vspace{-0.8cm}
\subsection{Softmax MCTS}
As in the previous section, Spearmint was run with $m=6$ parameters to optimize the win-rate of Softmax MCTS versus standard MCTS. We present the results of the two best sets of parameters in the tables below.

\begin{table} [H]
\centering
\caption{Win-rates (\%) for Softmax MCTS with the 1st set of parameters (-4.0, -10.0, -4.0, -4.0, -10.0, -4.0) versus various methods (left column) in 9x9 Go.}
\begin{tabular}{|c|c|}
\hline
Standard MCTS & 56.3 \\
\hline
Coulom(2,16) & \textbf{59.3} \\ 
\hline
Coulom(4, 32) & 55.5  \\
\hline
Coulom(8, 64) & 55.1 \\
\hline
\end{tabular}
\end{table}
\vspace{-10pt}
\begin{table} [H]
\centering
\caption{Win-rates (\%) for Softmax MCTS with the 2nd set of parameters (-4.0, -10.0, -7.9, -10.0, -10.0, -7.8) versus various methods (left column) in 9x9 Go.}
\begin{tabular}{|c|c|} 
\hline
Standard MCTS & 57.8 \\
\hline
Coulom(2,16) & 57.6 \\ 
\hline
Coulom(4,32) & \textbf{59.5} \\
\hline
Coulom(8,64)  & 51.9 \\
\hline
\end{tabular}
\end{table}
\vspace{-10pt}
We also test Softmax MCTS against standard MCTS in 19x19 Go and find that the first set of parameters achieves a win-rate of 53.2\%, whereas the second set of parameters achieves a win-rate of 55.9\%. \par

\begin{figure} [H]
\centering	
\includegraphics[scale=0.45]{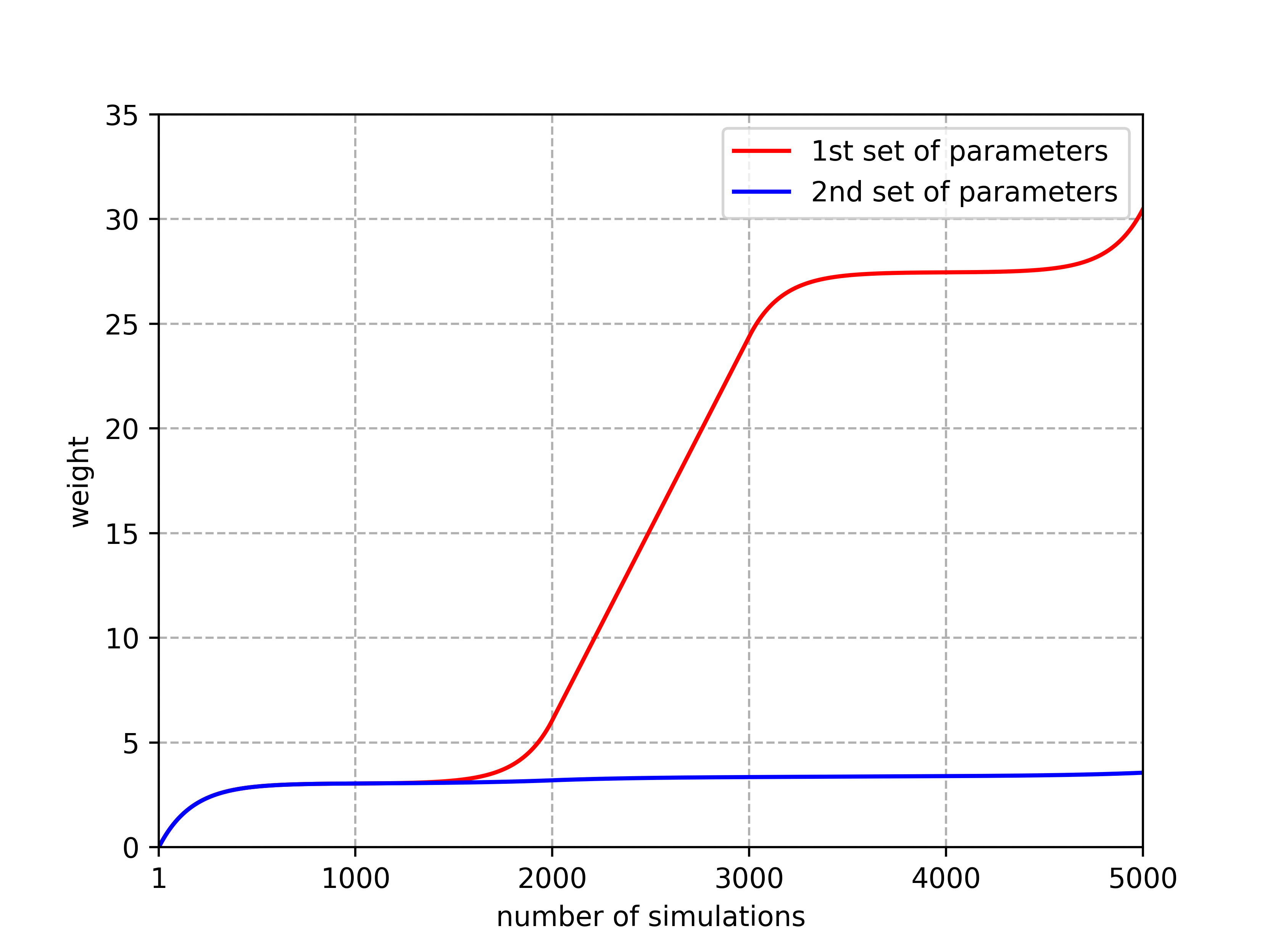}
\caption{Weight profiles of Softmax MCTS for both sets of parameters.}
\end{figure}

\vspace{-0.8cm}
\section{Conclusion and Future Work}
\vspace{-0.1cm}
In this paper, we present a unifying framework for backpropagation strategies in MCTS for min-max trees. Our proposed method allows one to perform optimization in two orthogonal directions. The first algorithm Softmax MCTS allows one to find the optimal schedule that weights the best child more gradually as the tree grows, and the second method Monotone MCTS generalizes previous work in adapting the update rule to get the most accurate estimate of a node's value in a non-stationary setting. \par 
Doing so requires optimization over the space of monotone functions, a high-dimensional problem we overcome efficiently by using parallelized Bayesian optimization over a Gaussian process prior. Once the parameters that define the optimal monotone function are found, they can be incorporated into MCTS with negligible overhead. Our experiments show that this new approach is superior to previous methods in the literature. \par 
To conclude, we would like to note also that it is possible, indeed advisable, to perform the optimization in conjunction with the exploration constant in the selection phase, but we have decided in this paper to focus solely on the backpropagation phase and to elucidate the effects of different monotone weight profiles on the win-rate. Combining these optimal backup strategies with other phases of MCTS will be the topic of future work.


\end{document}